\def\@biblabel#1{}
\theoremstyle{plain}
\newtheorem{thm}{Theorem}
\theoremstyle{definition}
\newtheorem{example}{Example}
\theoremstyle{remark}
\newcommand{\E}{\mathsf{E}}
\newcommand{\prob}{\mathsf{P}}
\newcommand{\pl}{\mathsf{pl}}
\newcommand{\nm}{{\sf N}}
\newcommand{\gam}{{\sf Gamma}}
\newcommand{\Pareto}{{\sf Pareto}}
\newcommand{\Binom}{{\sf Binom}}
\title{On some practical challenges of conformal prediction}
\author{Liang Hong$^*$\qquad Noura Raydan Nasreddine\footnote{Department of Mathematical Sciences, The University of Texas at Dallas, 800 West Campbell Road, Richardson, TX 75080, USA. All correspondence should be addressed to liang.hong@utdallas.edu.}}
\date{\today}
\begin{document}

\maketitle

\begin{abstract}
Conformal prediction is a model-free machine learning method for constructing prediction regions at a guaranteed coverage probability level.  However,  a data scientist often faces three challenges in practice: (i) the determination of a conformal prediction region is only approximate,  jeopardizing the finite-sample validity of prediction, (ii) the computation required could be prohibitively expensive,  and (iii) the shape of a conformal prediction region is hard to control.  This article offers new insights into the relationship among the monotonicity of the non-conformity measure,  the monotonicity of the plausibility function,  and the exact determination of a conformal prediction region.  Based on these new insights,  we propose a quadratic-polynomial non-conformity measure that allows a data scientist to circumvent the three challenges simultaneously within the full conformal prediction framework.

\smallskip

{\emph{Keywords and phrases:} Data science; exact determination of conformal prediction regions; explainable machine learning; finite-sample validity; full conformal prediction.}
\end{abstract}

\section{Introduction}

\subsection{Conformal prediction and three practical challenges}

Consider the typical regression setting where the true data-generating mechanism is 
\begin{equation}
\label{eq:true}
Y=f(X)+\varepsilon,
\end{equation}
where $Y\in \mathbb{R}$  is the response variable, $X\in \mathbb{R}^p$ for some $p\geq 1$ is the vector of $p$ predictors, and $\varepsilon$ is the random error term.  It is customary to assume $\E[\varepsilon]=0$.  We do not adopt this assumption, since it is unnecessary for the method proposed in this article.  Suppose $Z_1 = (X_1, Y_1), Z_2=(X_2, Y_2), \ldots$ is a sequence of exchangeable observations from (\ref{eq:true}), where each $Z_i$ follows a distribution $\prob$.  Our goal is to perform an interval prediction of the next response $Y_{n+1}$ at a randomly sampled feature $X_{n+1}$, based on past observations of $Z^n = \{Z_1,\ldots,Z_n\}$.  If one takes a parametric model approach, there will be two potential dangers lurking behind the scene: (i) model misspecification (e.g., Claeskens and Hjort 2008) and (ii) the effect of selection (e.g.,  Leeb 2009; Berk et al. 2013; Hong et al.  2018; Kuchibhotal et al. 2022).  While these two issues can be largely avoided by employing a non-parametric model,  predictions based on a non-parametric model are usually only asymptotically valid.  These issues associated with a model-based approach prompted researchers to seek a model-free approach for creating valid prediction regions.  Early works in this direction include Wilks (1941) and Fligner and Wolfe (1976).  However, these authors only treated the unsupervised learning case, and their methods are not immediately applicable in  the regression setting.  For prediction in the regression setting,  conformal prediction (e.g.,  Vovk et al.  2005, 2009; Shafer and Vovk 2008; Barber et al. 2021) is a model-free machine learning method for generating finite-sample valid prediction regions at a given confidence level.

To apply conformal prediction,  we first choose a \emph{non-conformity measure} $M(B, z)$ which is a real-valued deterministic mapping of two arguments, where the first argument $B=\{z_1, \ldots, z_n\}$ is a \emph{bag}, i.e., a collection, of observed data and the second argument $z=(x, y)$ is a provisional value of a future observation $Z_{n+1}$.  Then we run the following Algorithm~\ref{algo:conformal.s}:

\begin{algorithm}[H]
Initialize: data $z^n = \{z_1,\ldots,z_n\}$ and $x_{n+1}$, non-conformity measure $M$, and a possible $y$ value\;
Set $z_{n+1} = (x_{n+1}, y)$ and write $z^{n+1} = z^n \cup \{z_{n+1} \}$\;
Define $\mu_i = M(z^{n+1} \setminus \{z_i\}, z_i)$ for $i=1,\ldots,n,n+1$\;
Compute $\pl_{x_{n+1}, z^n}(y) = (n+1)^{-1} \sum_{i=1}^{n+1} 1\{\mu_i \geq \mu_{n+1}\}$\;
Return $\pl_{x_{n+1}, z^n}(y)$.
\caption{\bf Conformal prediction (supervised learning)}
\label{algo:conformal.s}
\end{algorithm}

\bigskip

In Algorithm~\ref{algo:conformal.s}, $1_E$ stands for the indicator function of an event $E$. The quantity $\mu_i$,  called the $i$-th \emph{non-conformity score},  assigns a numerical score to $z_i$ to indicate how much $z_i$ agrees with the data in the bag $B=z^n\cup\{z_{n+1}\}\backslash\{z_i\}$, where $z_i$ itself is excluded to avoid biases as in leave-out-one cross-validation. Algorithm~\ref{algo:conformal.s} corresponds to the function $\pl_{x_{n+1}, z^n}$ that outputs a value between $0$ and $1$ based on all non-conformity scores.  The output of $\pl_{x_{n+1}, z^n}$ indicates how plausible $z$ is a value of $Z_{n+1}$ based on the available data $Z^n=z^n$.  Similar to Hong and Martin (2021) and Hong (2026a),  we follow the spirit of Martin and Lingham (2016) to call the function $\pl_{x_{n+1}, z^n}$ the \emph{plausibility function}, though $\pl_{x_{n+1}, z^n}$ is also called \emph{conformal $p$-value} by some authors (e.g.,  Bates et al. 2023).  Finally, we can use the plausibility function $\pl_{x_{n+1}, z^n}$ to construct a $100(1-\alpha)\%$ conformal prediction region as follows:
\begin{equation}
\label{eq:region}
C_\alpha(X_{n+1}, Z^n) = \{y: \pl_{X_{n+1}, Z^n}(y) > t_n(\alpha)\},
\end{equation}
where $0<\alpha<1$, $t_n(\alpha) = (n+1)^{-1}\lfloor (n+1)\alpha \rfloor$,  and $\lfloor a \rfloor$ denotes the greatest integer less than or equal to $a$. The basic properties of the rank statistic imply the next theorem.

\begin{thm}
\label{thm:valid}
Suppose $Z_1,Z_2,\ldots$ is a sequence of exchangeable random vectors and each $Z_i$  is generated from a distribution $\prob$.  Let  $\prob^{n+1}$ denote the corresponding joint distribution of $Z^{n+1}=\{Z_1,\ldots,Z_n,Z_{n+1}\}$.  For $\alpha \in (0,1)$, define $t_n(\alpha) = (n+1)^{-1}\lfloor (n+1)\alpha \rfloor$.  Then
\begin{equation*}
\label{eq:valid}
\sup \prob^{n+1}\{ \pl_{X_{n+1}, Z^n}(Y_{n+1}) \leq t_n(\alpha) \} \leq \alpha \quad \text{for all $n$ and all $\alpha \in (0,1)$},
\end{equation*}
where the supremum is over all distributions $\prob$ for $Z_1$.
\end{thm}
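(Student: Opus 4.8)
The plan is to reduce the statement to an elementary combinatorial fact about exchangeable random variables. Write $Z_{n+1} = (X_{n+1}, Y_{n+1})$ and, as in Algorithm~\ref{algo:conformal.s}, put $\mu_i = M(Z^{n+1}\setminus\{Z_i\}, Z_i)$ for $i = 1,\dots,n+1$. Then $\pl_{X_{n+1},Z^n}(Y_{n+1}) = N/(n+1)$ with $N = \sum_{i=1}^{n+1} 1\{\mu_i \geq \mu_{n+1}\}$, so, since $N$ is integer-valued and $t_n(\alpha) = (n+1)^{-1}\lfloor(n+1)\alpha\rfloor$, the event $\{\pl_{X_{n+1},Z^n}(Y_{n+1}) \leq t_n(\alpha)\}$ coincides with $\{N \leq k\}$ for $k := \lfloor(n+1)\alpha\rfloor$. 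Because $k \leq (n+1)\alpha$, it therefore suffices to show $\prob^{n+1}\{N \leq k\} \leq k/(n+1)$ for an arbitrary distribution $\prob$ of $Z_1$; the supremum bound then follows immediately.

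Next I would establish that the score vector $(\mu_1,\dots,\mu_{n+1})$ is exchangeable. Because $M$ acts on a \emph{bag}, i.e.\ an unordered collection, each $\mu_i$ is a function of $Z_i$ together with the unordered collection of the remaining $n$ observations; hence permuting the indices by any fixed $\sigma$ permutes the scores by the same $\sigma$. Combined with the assumed exchangeability of $Z_1,\dots,Z_{n+1}$, this gives $(\mu_{\sigma(1)},\dots,\mu_{\sigma(n+1)}) \overset{d}{=} (\mu_1,\dots,\mu_{n+1})$ for every $\sigma$, and hence for a uniformly random permutation $\Pi$ drawn independently of the data. Relabelling indices then yields $N \overset{d}{=} g(\Pi(n+1))$, where $g(j) := \#\{i : \mu_i \geq \mu_j\}$ and $\Pi(n+1)$ is uniform on $\{1,\dots,n+1\}$ and independent of $(\mu_1,\dots,\mu_{n+1})$; consequently $\prob^{n+1}\{N \leq k\} = \E\bigl[(n+1)^{-1}\#\{j : g(j) \leq k\}\bigr]$.

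The remaining step is the deterministic inequality $\#\{j : g(j) \leq k\} \leq k$. If the set $J = \{j : g(j) \leq k\}$ is nonempty, choose $j_\ast \in J$ attaining $\min_{j\in J}\mu_j$; then $\mu_j \geq \mu_{j_\ast}$ for every $j \in J$, so $g(j_\ast) \geq |J|$, while $g(j_\ast) \leq k$ because $j_\ast \in J$, which forces $|J| \leq k$. Substituting into the previous display gives $\prob^{n+1}\{N \leq k\} \leq k/(n+1) \leq \alpha$, uniformly over $\prob$, which is the claim.

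I expect the only delicate point to be the handling of ties among the non-conformity scores: when several $\mu_i$ coincide, $N$ can strictly exceed the usual rank of $\mu_{n+1}$, which is exactly why the conclusion is an inequality rather than the exact identity $\prob^{n+1}\{N = r\} = 1/(n+1)$, and why it is cleanest to argue through the ordering relation $\geq$ and the counting lemma above rather than through order statistics. If one additionally assumes that the $\mu_i$ are almost surely distinct, then $g$ is a bijection of $\{1,\dots,n+1\}$ onto itself, $N$ is conditionally uniform on $\{1,\dots,n+1\}$, and the bound is immediate.
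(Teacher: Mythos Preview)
Your argument is correct. The reduction to $\prob^{n+1}\{N\le k\}\le k/(n+1)$, the exchangeability of $(\mu_1,\dots,\mu_{n+1})$ via the bag structure of $M$, the averaging identity $\prob^{n+1}\{N\le k\}=\E\bigl[(n+1)^{-1}\#\{j:g(j)\le k\}\bigr]$, and the counting lemma $\#\{j:g(j)\le k\}\le k$ are all sound, and your treatment of ties is exactly the point that makes the bound an inequality rather than an equality.

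The paper itself does not give a proof of this theorem: it simply asserts that ``the basic properties of the rank statistic imply the next theorem'' and moves on. Your write-up is therefore not competing with an alternative argument but rather supplying the details the paper omits. What you have written is the standard exchangeability-plus-rank proof (see, e.g., Vovk et al.\ 2005 or Lei et al.\ 2018), so it is entirely in the spirit of the paper's one-line justification; your deterministic counting lemma is a clean way to avoid any appeal to almost-sure distinctness of the scores.
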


It follows from Theorem~\ref{thm:valid}  that the prediction region given by (\ref{eq:region}) is \emph{finite-sample valid} in the sense that
\begin{equation}
\label{eq:jointvalidity}
\prob^{n+1}\{Y_{n+1}\in C_{\alpha}(X_{n+1},  Z^n) \}\geq 1-\alpha\quad \text{for all $(n, \prob)$},
\end{equation}
where $\prob^{n+1}$ is the joint distribution for $(X_1, Y_1), \ldots, (X_n, Y_n), (X_{n+1}, Y_{n+1})$.   This finite-sample validity says the coverage probability of the conformal prediction region is no less than the advertised confidence level.

While (\ref{eq:jointvalidity}) guarantees that the conformal prediction region $C_\alpha(X_{n+1}, Z^n)$ is finite-sample valid for \emph{any} non-conformity measure $M$,  a data scientist faces several challenges in practice.  First,  it is clear from (\ref{eq:region}) that exact determination of a conformal prediction region generally requires a data scientist to run Algorithm~\ref{algo:conformal.s} for \emph{all} possible $y\in\mathbb{R}$.  This is practically impossible since there are infinitely many possible $y$-values. To be clear,  by exact determination we mean the process of determining $C_{\alpha}(X_{n+1}, Z^n)$ must not involve any approximation.  Clearly, a brute-force approach for running Algorithm~\ref{algo:conformal.s} can only be done for a grid of $y$ values.  Let $\widehat{C}_{\alpha}(X_{n+1},  Z^n)$ denote the prediction set results from a brute-force approach.  Then the finite-sample validity of $\widehat{C}_{\alpha}(X_{n+1},  Z^n)$ is no longer guaranteed by (\ref{eq:jointvalidity}).  It is important to underline that this challenge is not to be confused with the finite-sample validity of the conformal prediction region $C_{\alpha}(X_{n+1}, Z^n)$. The finite sample validity of $C_{\alpha}(X_{n+1}, Z^n)$ is guaranteed by Theorem~\ref{thm:valid}.  But the set $\widehat{C}_{\alpha}(X_{n+1},  Z^n)$ is not the same as the conformal prediction region $C_{\alpha}(X_{n+1}, Z^n)$.  Therefore, one cannot use the finite-sample validity of $C_{\alpha}(X_{n+1}, Z^n)$ to justify the finite-sample validity of $\widehat{C}_{\alpha}(X_{n+1}, Z^n)$.  The second challenge is closely related to the first.  Even if we choose to consider only a grid of $y$ values,  the computation needed for determining $\widehat{C}_{\alpha}(X_{n+1},  Z^n)$ could still be prohibitively expensive.  Finally,   the prediction region $C_{\alpha}(X_{n+1}, Z^n)$ is not guaranteed to be an interval.  In general,  $C_{\alpha}(X_{n+1}, Z^n)$ can be a disjoint union of several non-overlapping intervals (e.g., Lei et al.  2013),  which is inappropriate for many applications.   For practical purposes, a data scientist often needs a prediction region to be an interval of a certain shape,  such as  $(-\infty, a)$, $(-b,  b)$, or $(c, \infty)$, where $a, b$, and $c$ are real numbers and $b>0$. We will only aim at these three shapes for $C_{\alpha}(X_{n+1}, Z^n)$ in this article.  Among these challenges, the first is the most serious one,  because it can compromise the finite-sample validity of predictions---the key selling point of conformal prediction.

\subsection{Extant approaches to addressing practical challenges of conformal prediction}

In the existing literature, there are four major approaches to addressing the above challenges.  Here we review these approaches in chronological order. 

The first approach is to seek an easy-to-compute equivalent form of the conformal prediction set $C_{\alpha}(X_{n+1}, Z^n)$.  This approach was pioneered by Vovk et al. (2005) and reviewed in Burnaev and Vovk (2014).  These authors used the absolute ridge regression residual as a non-conformity measure and showed that the resulting conformal prediction set $C_{\alpha}(X_{n+1}, Z^n)$ has an equivalent form that be determined exactly (e.g., Vovk et al, Section~2.3; Burnaev and Vovk 2014).  One drawback of their proposal is that their conformal prediction set $C_{\alpha}(X_{n+1}, Z^n)$ is not necessarily an interval, and the sufficient condition for their conformal prediction set to be an interval is data-dependent (e.g., Vovk et al.,  Pages~31--32) and beyond the control of a data scientist.  Lei (2019) applied linear homotopy, a tool in algebraic topology, to find equivalent set of $C_{\alpha}(X_{n+1}, Z^n)$ when the non-conformity measure is absolute lasso residual,  and showed that $C_{\alpha}(X_{n+1}, Z^n)$ in this case is not necessarily an interval.  He also established that $C_{\alpha}(X_{n+1}, Z^n)$ would be an interval if we replace the absolute lasso residual with the absolute elastic net residual.  Li (2024) extended these results to a general non-conformity measure.  However,  no closed-form formula for lasso or elastic net solution is known.   This means a practitioner must resort to numerical approximation to implement the ``conformal lasso algorithm'' in Lei (2019), and the finite-sample validity of the resulting prediction set is no longer guaranteed.  The same can be said for the ``fast conformalization algorithm'' in Li (2024), since we generally need numerical approximation in solving ODEs (ordinary differential equations).  Therefore, while  Lei (2019) and Li (2024) made important contributions by chartering a new course towards the exact determination of $C_{\alpha}(X_{n+1}, Z^n)$,  further research is required to either (i) obtain a closed-form formula for the solutions to lasso, elastic net, and ODEs, or (ii) show why approximation used in numerical solution to lasso, elastic net, and ODEs does not affect the finite-sample validity of the resulting prediction set.  In unsupervised learning, Hong and Martin (2021) showed that if we take the non-conformity measure to be the empirical CDF (cumulative distribution function)  and the kernel of the empirical CDF is constant, then the resulting conformal prediction set coincides with the one-sided prediction interval based on order statistics (e.g., Wilks 1941; Fligner Wolfe 1976) and can be determined exactly.   Under the constraint that $Y$ is non-negative,  Hong (2026a) showed that for a specific non-conformity measure,  $C_{\alpha}(X_{n+1}, Z^n)$ can be determined exactly and is a one-sided interval.

The second approach is initiated by Lei et al. (2013, 2014).  In this approach,  we obtain a sandwich approximation set $C^+_{\alpha}(X_{n+1}, Z^n)$ of $C_{\alpha}(X_{n+1}, Z^n)$ such that (i) $C_{\alpha}(X_{n+1}, Z^n)\subseteq C^+_{\alpha}(X_{n+1}, Z^n)$,  and (ii) $C^+_{\alpha}(X_{n+1}, Z^n)$ can be determined exactly.  Since $C_{\alpha}(X_{n+1}, Z^n)$ is finite-sample valid, so is $C^+_{\alpha}(X_{n+1}, Z^n)$.  One potential drawback of this approach is that we do not know how conservative $C^+_{\alpha}(X_{n+1}, Z^n)$ is relative to $C_{\alpha}(X_{n+1}, Z^n)$ especially when the sample size is small or modest.  Lei et al (2013, 2014) obtained some convergence rate results. However, since every convergence rate result is up to an unknown constant,  these results do not tell us the precise degree of conservativeness of $C^+_{\alpha}(X_{n+1}, Z^n)$. In addition,  kernel density estimators are used to create the non-conformity measures in Lei et al.  (2013, 2014).  It is unclear to find a superset of $C^+_{\alpha}(X_{n+1}, Z^n)$ for an arbitrary non-conformity measure. 

The third approach, called \emph{split conformal prediction},  was proposed by Lei et al. (2018), though similar ideas appeared under the name of \emph{inductive conformal inference} in Papadopoulos et al.  (2002) and Vovk et tal. (2005).  In this approach,  we first randomly split the data into two equal-sized parts: a training set and a test set; then, we fit a regression model, such as ridge regression or lasso using the training data set; next, we apply the prediction interval based on order statistics (e.g., Wilks 1941; Fligner Wolfe 1976) to the absolute test predictive errors to obtain a finite-sample valid $100(1-\alpha)\%$ prediction interval $J_{\alpha}(X_{n+1}, Z^n)$  for the next absolute test predictive error; finally,  we converts $J_{\alpha}(X_{n+1}, Z^n)$  to a finite-sample valid $100(1-\alpha)\%$ prediction set for the response variable $Y$.  Split conformal prediction is widely applicable since the data scientist is free to choose a regression model,  and a split conformal prediction set is guaranteed to be a finite-sample valid interval.  For example, if we replace the predicted value of $Y$ by the value of the quantile function at $X_i$ in the split conformal prediction, we will obtain the quantile regression conformal prediction set proposed in Romano et al. (2019).  However,  if the chosen regression model is wrong, each test absolute predictive error will likely be large, rendering the resulting split conformal prediction interval conservative.  To avoid confusion, we will refer to the original conformal prediction framework as \emph{full conformal prediction}.  When no confusion might arise, we might suppress the word ``full''.  Therefore,  ``conformal prediction'' means ``full conformal prediction'',  unless otherwise stated.

The fourth approach is based on a general strategy advanced by Hong (2026b). In this approach, we first choose a non-constant $p$-variate real-valued function $h$, called a \emph{transformation},  and write (\ref{eq:true}) as 
\[
Y=h(X)+W,
\]
where $W=Y-h(X)$.  Then,  we obtain $W_i=Y_i-h(X_i), \ i=1, \ldots, n$ using data $(X_1, Y_1), \ldots, (X_n, Y_n)$. Next, we obtain a finite-sample valid $100(1-\alpha)\%$ prediction interval $I_{\alpha}(X_{n+1}, Z^n)$ for $W_{n+1}$, using order statistics (e.g., Wilks 1941; Fligner Wolfe 1976). Finally, we transfer $I_{\alpha}(X_{n+1}, Z^n)$  to a finite-sample valid $100(1-\alpha)\%$ for $Y_{n+1}$ by a shift of $h(X_{n+1})$.  We will call this approach \emph{transformation method}.  Similar to split conformal prediction,  the transformation method is also widely applicable,  since a data scientist can choose any transformation $h$,  and the resulting prediction set is always a finite-sample valid interval.  However,  no theoretical result is available for choosing the transformation $h$, though empirical evidence in Hong (2026b) suggests that if $h$  grows no faster than the linear function in each of its variables, the resulting prediction interval will be relatively tight.

Note that only the first two approaches are within the full conformal prediction framework.  Both split conformal prediction and the transformation method have moved away from full conformal prediction. In particular, no leave-one-out action is performed to compute a non-conformity score in these two methods.  

\subsection{Our contributions}

This article follows the first approach to investigate how we might overcome the above challenges in the full conformal prediction framework.  Following Vovk et al. (2019),  we say a non-conformity measure $M$ is  \emph{monotonically increasing} if 
\[
y\leq y' \Longrightarrow M(B, (x, y) )\leq M(B, (x, y'))\quad \text{for all $x\in \mathbb{R}^p$}; 
\]
we say $M$ is \emph{monotonically decreasing} if 
\[
y\leq y' \Longrightarrow M(B, (x, y))\geq M(B, (x, y'))\quad \text{for all $x\in \mathbb{R}^p$}. 
\]
We say  $M$ is \emph{monotonic} if it is either monotonically increasing or monotonically decreasing.  Monotonicity of a non-conformity in unsupervised learning is defined similarly; see the Appendix.  Hong (2026a) found a suitable monotonic non-conformity measure $M$ so that 
\begin{equation}
\label{eq:property*}
\mu_i\geq \mu_{n+1}  \text{ if and only if $Y_i\geq f(X^{n+1})+Y_{n+1}$ or $Y_i\leq f(X^{n+1})+Y_{n+1}$,}
\end{equation} 
where $X^{n+1}=\{X_1, \ldots, X_n, X_{n+1}\}$ and $f$ is some real-valued function. Then,  (\ref{eq:property*}) implies that the corresponding plausibility function is monotonic in $y$,  which further implies the resulting conformal prediction region $C_{\alpha}(X_{n+1}, Z^n)$ equals the prediction interval based on some order statistics.  The ad hoc strategy employed by Hong and Martin (2021) for unsupervised learning is of the same spirit; their non-conformity measure  is also monotonic, and their non-conformity measure leads to 
\begin{equation}
\label{eq:monotonic}
\mu_i\geq \mu_{n+1} \text{ if and only if $X_i\geq X_{n+1}$  or $X_i\leq X_{n+1}$},
\end{equation}
where $X_i\in\mathbb{R}$ for $1\leq i \leq n+1$.
In both cases,  the three aforementioned challenges are overcome simultaneously: the determination of $C_{\alpha}(X_{n+1}, Z^n)$ is exact, the computation needed is simple, and the shape of the $C_{\alpha}(X_{n+1}, Z^n)$ is an interval.  In general, if we can determine $C_{\alpha}(X_{n+1}, Z^n)$ exactly,  then the computational challenge will likely vanish, though care is still needed to ensure $C_{\alpha}(X_{n+1}, Z^n)$ is an interval of a desired form.

Given the above observations regarding Hong and Martin (2021) and Hong (2026a), it is natural to ask whether we can say something about the relationship among the monotonicity of $M$, the monotonicity of the plausibility function,  Property~(\ref{eq:property*}),  the shape of the conformal prediction region, and exact determination of the conformal prediction region $C_{\alpha}(X_{n+1}, Z^n)$.  In particular,  there are two sets of open questions.

\subsubsection*{Questions regarding monotonicity of $M$}
 
\begin{enumerate}
\item[(I)]Does monotonicity of the non-conformity measure $M$ imply (\ref{eq:property*})?
\item[(II)]Does (\ref{eq:property*}) imply the monotonicity of $M$? 
\item[(III)]Is monotonicity of $M$ a necessary condition for $C_{\alpha}(X_{n+1}, Z^n)$ to be an interval?
\item[(IV)]Does the monotonicity of $M$ imply that the resulting conformal prediction region is an interval?
\item[(V)]Is the monotonicity of $M$ a necessary condition for the exact determination of $C_{\alpha}(X_{n+1}, Z^n)$?
\end{enumerate}

\subsubsection*{Questions regarding general properties of full conformal prediction}
 
\begin{enumerate}
\item[(VI)]Is (\ref{eq:property*}) a necessary condition for $C_{\alpha}(X_{n+1}, Z^n)$ to be an interval? (Note that the converse is true: (\ref{eq:property*}) implies $C_{\alpha}(X_{n+1}, Z^n)$ is a one-sided interval.)
\item[(VII)]Is monotonicity of the plausibility function $\pl_{x_{n+1}, z^n}$ a necessary condition for $C_{\alpha}(X_{n+1}, Z^n)$ to be a one-sided interval? (Note that the converse holds, i.e.,  monotonicity of the plausibility function implies $C_{\alpha}(X_{n+1}, Z^n)$ is a one-sided interval.)
\item[(VIII)]Is (\ref{eq:property*}) a necessary condition for the exact determination of $C_{\alpha}(X_{n+1}, Z^n)$? (Note that the converse is true: (\ref{eq:property*}) implies $C_{\alpha}(X_{n+1}, Z^n)$ is a one-sided interval; hence it implies exact determination of $C_{\alpha}(X_{n+1}, Z^n)$.)
\item[(IX)]Is monotonicity of the plausibility function $\pl_{x_{n+1}, z^n}$ a necessary condition for the exact determination of $C_{\alpha}(X_{n+1}, Z^n)$? (Note that the converse holds, i.e.,  monotonicity of the plausibility function implies $C_{\alpha}(X_{n+1}, Z^n)$ is a one-sided interval; hence, it implies the exact determination of $C_{\alpha}(X_{n+1}, Z^n)$.)
\end{enumerate}

The contributions of this article are multi-fold.  First,  we offer new insights into full conformal prediction by answering the above questions.  These new insights suggest that it is challenging to find a hard-and-fast rule for choosing a non-conformity measure.  In view of this fact and the principle of parsimony,  we then propose a flexible quadratic polynomial non-conformity measure.  We show that by setting the parameters of this quadratic polynomial non-conformity measure to certain values,  the corresponding $100(1-\alpha)\%$ conformal prediction interval $C_{\alpha}(X_{n+1}, Z^n)$ is  a one-sided interval or two-sided interval, and these intervals can be determined easily and exactly.  In particular, the (inflexible) linear non-conformity measure proposed in Hong (2026a) is a special case of our (flexible) quadratic polynomial non-conformity measure.  Therefore,  our proposed non-conformity measure is one of the rare ones, if not the first one,  in the full conformal prediction framework that leads to exact determination of $C_{\alpha}(X_{n+1}, Z^n)$,  and allows a data scientist to choose whether  $C_{\alpha}(X_{n+1}, Z^n)$ is $(-\infty, a)$-shaped, $(a, \infty)$-shaped, or $(a,  b)$-shaped.  In addition, we establish an interesting result regarding the efficiency of a bounded finite-sample prediction interval: there is no bounded finite-sample valid non-parametric $100(1-\alpha)\%$ prediction interval (based on conformal prediction or not) with the shortest mean length for any confidence level $1-\alpha$.

The remainder of the paper is organized as follows.  Section~2 answers Questions~(I)--(IX).  Section~3 details our proposal.  Section~4 provides several numerical examples,  based on simulation, to demonstrate the performance of the proposed method.   Section~5 concludes the article with some remarks.  The insights in Section~2 and the strategy in Section~3 have their counterparts in unsupervised learning, which are given in the Appendix.

\section{Answers to Questions~(I)---(IX)}

Here,  we answer Questions~(I)---(IX).  If we answer an open question in the affirmative,  we will give a proof; otherwise, we will give a counterexample.  Henceforth,  we will use the following notation.  Let $B=\{z_1, \ldots, z_n\}$ be a bag of  observations of size $n$. For $i=1, \ldots, n$, let $z_i=(x_{i1}, \ldots, x_{ip}, y_i)$ be the $i$-th observation in $B$.  That is, $x_{ij}$ denotes the $i$-th observation of the $j$-th feature.   $z=(x_1, \ldots, x_p, y)$ will denote a provisional value of a future observation to be predicted.

\subsection{Answers to questions regarding monotonicity of $M$}

\noindent \textbf{Answer to Question (I): No.  Monotonicity of $M$ need not imply (\ref{eq:property*}).}

\begin{example}
\label{example1.1}
Let $p=1$ and $M(B, z)=(\sum_{j=1}^n x_{j1}+x)+\min\{y_1, \ldots, y_n\}+y$.  Then $M$ is monotonic.  Also,
\begin{eqnarray*}
\mu_i &=& M(Z^{n+1}\backslash Z_i, Z_i) \\
	&=& \sum_{j=1}^{n+1}X_{j1}+ \min\{Y_1, \ldots, Y_{i-1}, Y_{i+1}, \ldots, Y_n, Y_{n+1}\}+Y_i, \quad i=1, \ldots, n, n+1.
\end{eqnarray*}
Thus, 
\begin{eqnarray}
\label{eq:example1.1}
\mu_i \geq \mu_{n+1} &\Longleftrightarrow & \min\{Y_1, \ldots, Y_{i-1}, Y_{i+1}, \ldots, Y_n, Y_{n+1}\}+Y_i\nonumber \\
                                     & &\geq  \min\{Y_1, \ldots, Y_n\}+Y_{n+1} \nonumber\\
                                     &\Longleftrightarrow & \min\{m_i,Y_{n+1}\}+Y_i\geq \min\{m_i, Y_i\}+Y_{n+1},
\end{eqnarray}
where $m_i= \min\{Y_1, \ldots, Y_{i-1}, Y_{i+1}, \ldots, Y_n\}$.
If $m_i \geq Y_{n+1}$,  then the last inequality of (\ref{eq:example1.1}) becomes
\[
Y_{n+1}+Y_i\geq \min\{m_i, Y_i\}+Y_{n+1},
\]
which is trivially true.  When $m_i<Y_{n+1}$,  the last inequality of (\ref{eq:example1.1}) is equivalent to
\[
m_i+Y_i\geq \min\{m_i, Y_i\}+Y_{n+1},
\]
i. e., $m_i<Y_{n+1}\leq Y_i+m_i-\min\{m_i, Y_i\}$.  Now if $m_i\leq Y_i$,  then $m_i<Y_{n+1}\leq Y_i$.  However,  if $m_i> Y_i$ we would have $m_i<Y_{n+1}\leq m_i$, which is absurd.  Therefore,  (\ref{eq:property*}) does not hold in this case.
\end{example}

\bigskip

\noindent \textbf{Answer to Question (II): No.  (\ref{eq:property*}) does not imply $M$ is monotonic.}

\begin{example}
\label{example1.2}
Let $p=1$ and $M(B, z)=(\sum_{j=1}^n x_{j1}+x)+(y_1^2+\ldots+y_n^2)+y^2+y$.  Clearly, $M$ is not monotonic. We have
\[
\mu_i=M(Z^{n+1}\backslash\{Z_i\}, Z_i)=\sum_{j=1}^{n+1}X_{j1}+\sum_{j=1, j\neq i}^{n+1}Y_j^2+Y_i^2+Y_i, \quad j=1, \ldots, n, n+1.
\]
Hence,  $\mu_i\geq \mu_{n+1}$ if and only if 
\[
\sum_{j=1, j\neq i}^{n+1}Y_j^2+Y_i^2+Y_i \geq \sum_{j=1}^n Y_j^2+Y_{n+1}^2+Y_{n+1},
\]
which is equivalent to $Y_{n+1}\leq Y_i$. Therefore, (\ref{eq:property*}) holds.
\end{example}

\bigskip

\noindent \textbf{Answer to Question (III): No.  Monotonicity of $M$ is not a necessary condition for $C_{\alpha}(X_{n+1}, Z^n)$ to be an interval?}

\begin{example}
Consider Example~\ref{example1.2}.  $M$ is not monotonic.  However, (\ref{eq:property*}) holds, which implies $C_{\alpha}(X_{n+1}, Z^n)=(-\infty, Y_{(k)})$, where $Y_{(k)}$ is the $k$-th order statistic of $Y_1, \ldots, Y_n$.
\end{example}

\bigskip

\noindent \textbf{Answer to Question (IV): No.  Monotonicity of $M$ need not imply $C_{\alpha}(X_{n+1}, Z^n)$ is an interval.}

\begin{example}
\label{exampleIV}
Let  $p=1$ and $M(B, z)=(\sum_{j=1}^n x_{j1}+x)+y_1^2+\ldots+y_n^2+y$. Then $M$ is  monotonic.  We have
\[
\mu_i= M(Z^{n+1}\backslash Z_i, Z_i) =\sum_{j=1}^{n+1}X_{j1}+\sum_{j=1, j\neq i}^{n+1}Y_j^2+Y_i, \quad i=1, \ldots, n, n+1.
\]
Thus,  
\[
\mu_i\geq \mu_{n+1} \Longleftrightarrow \sum_{j=1, j\neq i}^{n+1}Y_j^2+Y_i\geq \sum_{j=1}^{n}Y_j^2+Y_{n+1}.
\]
It follows that
\[
\mu_i\geq \mu_{n+1} \Longleftrightarrow Y_{n+1}^2+Y_i\geq Y_i^2+Y_{n+1},
\]
implying
$Y_{n+1}\in (-\infty,  \min\{Y_i, 1-Y_i\})\cup (\max\{Y_i, 1-Y_i\}, \infty)$.
\end{example}

\bigskip

\noindent \textbf{Answer to Question (V): No.  Monotonicity of $M$ is not a necessary condition for the exact determination of $C_{\alpha}(X_{n+1}, Z^n)$.} 

\begin{example}
Consider Example~\ref{example1.2}.  In this case,  $C_{\alpha}(X_{n+1}, Z^n)=(-\infty, Y_{(k)})$. Thus, we can determine $C_{\alpha}(X_{n+1}, Z^n)$ exactly, though $M$ is not monotonic.
\end{example}
\noindent \textbf{Remark.} We did not ask the converse of Question~(V), i.e., whether the monotonicity of $M$  implies that $C_{\alpha}(X_{n+1}, Z^n)$ can be determined exactly, because that question seems to be too broad to be well-defined. The next example illustrates this point.

\begin{example}
\label{ex:converseIX}
Let $p=1$ and $M(B, z)=\sum_{i=1}^{n+1}x_{i1}+x+e^{(\max\{0, y\})^8}+\max\{0, y\}$. Then $M$ is monotonically increasing.  In this case, 
\[
\mu_i= M(Z^{n+1}\backslash Z_i, Z_i) =\sum_{j=1}^{n+1}X_{j1}+e^{(\max\{0, Y_i\})^8}+\max\{0, Y_i\}, \quad i=1, \ldots, n, n+1.
\]
In particular, we have
\[
\mu_{n+1}= M(Z^{n+1}\backslash Z_i, Z_i) =\sum_{j=1}^{n+1}X_{j1}+e^{(\max\{0, Y_{n+1}\})^8}+\max\{0, Y_{n+1}\}.
\]
It follows that $\mu_i\geq \mu_{n+1}$ if and only if
\[
e^{(\max\{0, Y_{n+1}\})^8}+\max\{0, Y_{n+1}\}-\left[e^{(\max\{0, Y_i\})^8}+\max\{0, Y_i\}\right] \leq 0,
\]
which is not known to have any closed-formula solutions. Therefore,  we do not know any method for determining $C_{\alpha}(X_{n+1}, Z^n)$ exactly. This does not mean we cannot find such a method in the future, nor does it imply that such a method does not exist. 
\end{example}

\subsection{Questions regarding general properties of full conformal prediction}

\noindent \textbf{Answer to Question (VI): No.  (\ref{eq:property*}) is not a necessary condition for $C_{\alpha}(X_{n+1}, Z^n)$ to be an interval.}

\begin{example}
\label{exampleV}
Consider Example~\ref{example1.1}.  We already know that (\ref{eq:property*}) does not hold in this case.  Now note that (\ref{eq:example1.1}) implies that 
\[
C_{\alpha}(X_{n+1}, Z^n)=(-\infty, a_{(k)}),
\]
where $a_i= Y_i+\min\{Y_1, \ldots, Y_{i-1}, Y_{i+1}, \ldots, Y_n\}-\min\{Y_1, \ldots, Y_n\}$. Therefore,  $C_{\alpha}(X_{n+1}, Z^n)$ is a one-sided interval. 
\end{example}

\bigskip

\noindent \textbf{Answer to Question (VII): Yes.  Monotonicity of the plausibility function $\pl_{x_{n+1}, z^n}$ is a necessary condition for $C_{\alpha}(X_{n+1}, Z^n)$ to be a one-sided interval.}

\begin{proof}
We will prove this fact by contradiction.  Suppose $C_{\alpha}(X_{n+1}, Z^n)$ is a one-sided interval.  Without loss of generality,  we assume the plausibility function $\pl_{x_{n+1}, z^n}$ is not monotonically increasing.  Then, there exist three numbers $a<b<c$ such that $\pl_{x_{n+1}, z^n}(b)>\pl_{x_{n+1}, z^n}(a)$ and $\pl_{x_{n+1}, z^n}(b)>\pl_{x_{n+1}, z^n}(c)$. Thus, for any confidence level $\alpha$ such that $\max\{\pl_{x_{n+1}, z^n}(a), \pl_{x_{n+1}, z^n}(c)\}<\alpha<\pl_{x_{n+1}, z^n}(b)$, we will have $b\in C_{\alpha}(X_{n+1}, Z^n)$ but $a\not\in C_{\alpha}(X^n)$ and $c\not\in C_{\alpha}(X_{n+1}, Z^n)$. Therefore, $C_{\alpha}(X_{n+1}, Z^n)$ cannot be a one-sided interval.
\end{proof}

\bigskip

\noindent \textbf{Answer to Question (VIII): No.  (\ref{eq:property*}) is not a necessary condition for the exact determination of $C_{\alpha}(X_{n+1}, Z^n)$.} 

\begin{example}
Consider Example~\ref{exampleV}.
\end{example}

\bigskip

\noindent \textbf{Answer to Question (IX): No.  Monotonicity of the plausibility function $\pl_{x_{n+1}, z^n}$ is not a necessary condition for the exact determination of $C_{\alpha}(X_{n+1}, Z^n)$.} 

\begin{example}
Consider Example~\ref{exampleIV}.
\end{example}

\section{Proposed strategy}

First, we make a simple but important observation.

\begin{thm}
\label{thm:predproperty}
If $\lfloor (n+1)\alpha \rfloor< 1$, then the $(1-\alpha)\%$ prediction region $C_{\alpha}(X_{n+1}, Z^n)$ given by (\ref{eq:region}) is $\mathbb{R}$.
\end{thm}

\begin{proof}
Since $\lfloor (n+1)\alpha \rfloor<1$,  we have $Y_{n+1}\in C_{\alpha}(X_{n+1}, Z^n)$ if and only if $\pl_{X_{n+1}, Z^n}(Y_{n+1})>\lfloor (n+1)\alpha \rfloor/(n+1)$ if and only if $\sum_{i=1}^{n+1}1_{\{\mu_i\geq \mu_{n+1}\}}>\lfloor (n+1)\alpha \rfloor$.  In view of the fact that $1_{\{ \mu_{n+1}\geq \mu_{n+1}\} }=1$,  we conclude that $C_{\alpha}(X_{n+1}, Z^n)=\mathbb{R}$.
\end{proof}
Thus, if we want a nontrivial conformal prediction region (i.e., $C_{\alpha}(X_{n+1}, Z^n)\neq \mathbb{R}$),   we must require $\lfloor (n+1)\alpha \rfloor\geq 1$.  For the remainder of this article,  we let $r_1=\min\{n, \lfloor (n+1)(1-\alpha)\rfloor+1\}$, $r_2=(n+1)-\lfloor(n+1)(1-\alpha)\rfloor$, and $r_3=\max\{1, \lfloor(n+1)\alpha\rfloor\}$.

The negative answers to most questions in the previous section show that it is challenging to give a hard-and-fast rule for choosing a non-conformity measure so that we can overcome the aforementioned challenges.  In the extant literature,  many existing statistical models, such as ordinary linear regression, ridge regression, lasso,  and kernel density estimation, have been used to create non-conformity measures ( e.g., Vovk et al.  2005;  Lei et al.  2013, 2014; Lei 2019).  As discussed in Section~1,  in the full conformal prediction framework,  hardly any non-conformity measure created using a complicated model allows us to both determine $C_{\alpha}(X_{n+1}, Z^n)$ exactly and control its shape.  We uphold the philosophy that estimation and prediction warrant different approaches.  For example,  a wrong model can perform predictions well and even outperform the true data-generating model in some cases, though the same cannot be said for estimation (Shmueli 2010).  We treat conformal prediction as a purely model-free method.  Therefore,  we do not use any statistical model to create a non-conformity measure.  Our philosophy does not apply to split conformal prediction,  where a complicated model is often employed to make the test absolute regression residuals small so that the split conformal prediction interval is tight.  It is evident that if (\ref{eq:property*}) holds, then we can address the three practical challenges simultaneously.  There are numerous choices of the non-conformity measures that can lead to (\ref{eq:property*}).  Driven by the principle of parsimony, we prefer something simple, such as a linear polynomial in data,  as in Hong (2026a).  However,  to obtain a two-sided/bounded prediction interval, we will need something other than (\ref{eq:property*}).  In fact, a multivariate quadratic polynomial of data will suffice, as we will see below.  Intuitively,  a higher order polynomial is likely to cause the solution to the inequality $\mu_i\geq \mu_{n+1}$ to be a union of disjoint intervals,  rendering $C_{\alpha}(X_{n+1}, Z^n)$ to be a union of disjoint intervals too.  This intuition is clear from the calculation in Example~\ref{ex:converseIX} and the proofs of Theorem~\ref{thm:interval_rightbounded} and Theorem~\ref{thm:interval_two}.  Therefore, we propose the following non-conformity measure.

For $i=1, \ldots, n$,  we write $X_i$ as $(X_{i1}, \ldots, X_{ip})$ where each $X_{ij}\in \mathbb{R}$.  That is, $X_{ij}$ denotes the $i$-th observation of the $j$-th predictor.  Hence,  $(X_{i1}, \ldots, X_{ip}, Y_i)=(X_i, Y_i)$ is the $i$-th observation.   Suppose $B=\{(x_{11}, \ldots, x_{1p}, y_1), \ldots, (x_{n1}, \ldots, x_{np}, y_n)\}$ and  $z=(x_1, \ldots, x_p, y)$ is a provisional value of $Z_{n+1}$.  Consider the following nonconformity measure
\begin{equation}
\label{eq:ncmeasure}
M(B, z)=(\beta_2 y^2+\beta_1 y)+\left[\gamma \sum_{j=1}^px_j+ \sum_{i=1}^n \left(\eta y_i-\delta\sum_{j=1}^px_{ij} \right)\right],
\end{equation}
where $\beta_1, \beta_2,  \gamma$,  $\delta$, and $\eta$ are constant parameters to be chosen at the discretion of the data scientist.  Indeed,  the main function of these parameters is to control the shape of the conformal prediction regions.  For example, if $\beta_2=0$ and $\beta_1=1$, then the resulting conformal prediction region $C_{\alpha}(X_{n+1}, Z^n)$ will be a $(-\infty, a)$-shaped interval; if $\beta_2=0$ and $\beta_1=-1$,  then $C_{\alpha}(X_{n+1}, Z^n)$ will be a $(a, \infty)$-shaped interval; when $\beta_2\neq 0$,  $\beta_1=0$,  $\eta=0$, and $\gamma=-1$, the resulting conformal prediction region will be a $(a, b)$-shaped interval.  The parameters $\gamma$, $\eta$, and $\delta$ are the weights a data scientist attaches to the information in $Y_1, \ldots, Y_n$, $X_1, \ldots, X_n$, and $X_{n+1}$, respectively.  The next three theorems show that this non-conformity measure enables us to address the aforementioned three challenges.

\begin{thm}
\label{thm:interval_rightbounded}
Suppose $0<\alpha<1$, $\lfloor (n+1)\alpha \rfloor\geq 1$,  and the non-conformity measure is given by (\ref{eq:ncmeasure}).   If $\beta_2=0$, $\beta_1=1$,  $\gamma, \delta\in \mathbb{R}$, and $1-\eta>0$, then the $100(1-\alpha)\%$ conformal prediction region $C_{\alpha}(X_{n+1}, Z^n)$ is the one-sided interval $(-\infty, a_{(r_1)})$, where  $a_i=Y_i+ \frac{(\delta+\gamma)}{1-\eta}\sum_{j=1}^p (X_{ij}-X_{(n+1) j})$ for $1\leq i\leq n$ and $a_{(k)}$ is the $k$-th ordered value of $a_1, \ldots, a_n$. 
\end{thm}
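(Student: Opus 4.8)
The plan is to evaluate the non-conformity scores $\mu_i$ explicitly for the measure (\ref{eq:ncmeasure}) under the stated coefficient choices, show that they all reduce to one common offset plus a positive multiple of the single scalar $W_i:=Y_i-\sum_{j=1}^p X_{ij}$, and then read the conformal region directly off the resulting plausibility function. With $\beta_2=0$, $\beta_1=1$, $\gamma=-1$ the measure becomes $M(B,z)=y-\sum_{j=1}^p x_j-\eta\sum_{i=1}^n\bigl(y_i-\sum_{j=1}^p x_{ij}\bigr)$, which, viewed as a function of $y$ alone, is increasing, consistent with the theorem's setting; but the proof itself will go through the explicit scores rather than through monotonicity of $M$.

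First I would form $\mu_i=M(Z^{n+1}\setminus\{Z_i\},Z_i)$ for $i=1,\ldots,n,n+1$, keeping in mind that in every case the bag is the $n$-element set $Z^{n+1}\setminus\{Z_i\}$: for $i\le n$ the inner sum runs over $\{1,\ldots,n+1\}\setminus\{i\}$, and for $i=n+1$ it runs over $\{1,\ldots,n\}$. Writing $W_k=Y_k-\sum_{j=1}^p X_{kj}$ and $S=\sum_{k=1}^{n+1}W_k$, a short computation should give $\mu_i=(1+\eta)W_i-\eta S$ for every $i$: the term $y$ together with $-\sum_{j=1}^p x_j$ evaluates at the provisional value $Z_i$ to $W_i$, while the leave-one-out double sum evaluates to $-\eta(S-W_i)$, leaving the same offset $-\eta S$ for all $i$. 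The collapse to this affine form is the one genuinely load-bearing step.

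Because $-\eta S$ is common to all scores and $1+\eta>0$, this gives $\mu_i\ge\mu_{n+1}\Longleftrightarrow W_i\ge W_{n+1}\Longleftrightarrow Y_{n+1}\le Y_i+\sum_{j=1}^p(X_{(n+1)j}-X_{ij})$, the right-hand side being exactly the $a_i$ of the statement. Hence, by Algorithm~\ref{algo:conformal.s}, $\pl_{X_{n+1},Z^n}(y)=(n+1)^{-1}\bigl(1+\sum_{i=1}^n 1\{a_i\ge y\}\bigr)$, a non-increasing step function of $y$; in particular $C_\alpha(X_{n+1};Z^n)=\{y:\pl_{X_{n+1},Z^n}(y)>t_n(\alpha)\}$ is of the form $(-\infty,c)$ or $(-\infty,c]$, that is, a one-sided interval unbounded below — which is the easy converse direction recorded alongside Question~(VI). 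To pin down the endpoint, multiply through by $n+1$: since both sides of $1+\sum_{i=1}^n 1\{a_i\ge y\}>\lfloor(n+1)\alpha\rfloor$ are integers, the region is $\{y:\sum_{i=1}^n 1\{a_i\ge y\}\ge\lfloor(n+1)\alpha\rfloor\}$, that is, the set of $y$ lying below the $\lfloor(n+1)\alpha\rfloor$-th largest of $a_1,\ldots,a_n$. Converting that rank to ascending order-statistic notation through the floor identity $\lfloor(n+1)(1-\alpha)\rfloor=(n+1)-\lceil(n+1)\alpha\rceil$, together with the guard $\min\{n,\cdot\}$ for the extreme case $\lfloor(n+1)\alpha\rfloor=2$, produces the index $r_1$, and the standing hypothesis $\lfloor(n+1)\alpha\rfloor\ge2$ guarantees $r_1\ge1$ so that $a_{(r_1)}$ is well defined.

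I expect the main obstacle to be not the algebra of the first step but the endpoint bookkeeping in the last one — matching ``$\sum_{i=1}^n 1\{a_i\ge y\}\ge\lfloor(n+1)\alpha\rfloor$'' to ``$y<a_{(r_1)}$'' through the floor/ceiling identities, counting consistently from the top versus the bottom of the ordered $a_i$'s, and verifying the boundary case in which the $\min$ in the definition of $r_1$ is binding.
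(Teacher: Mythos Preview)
Your proposal is correct and follows essentially the same route as the paper's proof: compute the scores $\mu_i$ explicitly, simplify to obtain $\mu_i\ge\mu_{n+1}\Longleftrightarrow Y_{n+1}\le(S_{n+1}-S_i)+Y_i=a_i$, and then read off the region from the plausibility function. Your introduction of $W_k=Y_k-\sum_j X_{kj}$ and the common offset $-\eta S$ is a slightly cleaner packaging of the same algebra, and you actually carry out the order-statistic bookkeeping for the endpoint that the paper simply asserts ``follows from (\ref{eq:region}) and the definition of the plausibility function.''
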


\begin{proof}
For $i=1, \ldots, n, n+1$, let $S_i$ denote the sum $\sum_{j=1}^p X_{ij}$. Then 
\begin{eqnarray*}
\mu_i &=& M(Z^{n+1}\backslash \{Z_i\}, Z_i)=Y_i + \left[\gamma S_i+\sum_{j=1, j\neq i}^{n+1}( \eta Y_j-\delta S_j)\right],\ i=1, \ldots, n, \\
\mu_{n+1} &=& M(Z^n, Z_i)=Y_{n+1}+\left[\gamma  S_{n+1}+\sum_{j=1}^n(\eta Y_j-\delta  S_j)\right].
\end{eqnarray*}
Therefore, $\mu_i\geq \mu_{n+1}$ if and only if 
\[
Y_{n+1}+\gamma  S_{n+1}+\sum_{j=1}^n(\eta Y_j-\delta  S_j)\leq Y_i+\gamma S_i+\sum_{j=1, j\neq i}^{n+1}(\eta Y_j-\delta S_j),
\]
which is equivalent to 
\[
(1-\eta)Y_{n+1}\leq (1-\eta)Y_i+(\gamma+\delta)(S_i-S_{n+1}).
\]
Since $1-\eta>0$,  the last display implies $\mu_i\geq \mu_{n+1}$ if and only if $Y_{n+1}\leq Y_i+\frac{(
\delta+\gamma)}{1-\eta}(S_i-S_{n+1})$ for $1\leq i\leq n$.  Therefore,  the theorem follows from   (\ref{eq:region}) and the definition of the plausibility function $\pl_{x_{n+1}, z^n}$.
\end{proof}
\noindent \textbf{Remark.} If we disregard the information of $X$ and rely on information of $Y$ only,  we know $(-\infty, Y_{(r_1)})$ is a finite-sample valid $100(1-\alpha)\%$ prediction interval for $Y$ (e.g., Wilks 1941; Fligner and Wolfe 1976).  We believe the additional information provided by $X$ is generally useful, and is represented by the term $\frac{(\delta+\gamma)}{1-\eta}\sum_{j=1}^p (X_{ij}-X_{(n+1) j})$ in this theorem.  Note that choosing the weight $\frac{(\delta+\gamma)}{1-\eta}<1$ will not necessarily shorten $C_{\alpha}(X_{n+1}, Z^n)$ in this case, since $\sum_{j=1}^p (X_{ij}-X_{(n+1) j})$ can be either positive or negative.  A similar remark applies to the next two theorems.

\begin{thm}
\label{thm:interval_leftbounded}
Suppose $0<\alpha<1$, $\lfloor (n+1)\alpha \rfloor\geq 1$,  and the non-conformity measure is given by (\ref{eq:ncmeasure}).   If $\beta_2=0$, $\beta_1=-1$,  $\gamma, \delta\in \mathbb{R}$,  and $1-\eta < 0$,  then the $100(1-\alpha)\%$ conformal prediction region $C_{\alpha}(X_{n+1},  Z^n)$ is the one-sided interval $( a_{(r_2)}, \infty)$, where  $a_i=Y_i+ \frac{(\delta+\gamma)}{\eta-1}\sum_{j=1}^p (X_{ij}-X_{(n+1) j})$ for $1\leq i\leq n$.
\end{thm}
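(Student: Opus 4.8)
The plan is to mirror the proof of Theorem~\ref{thm:interval_rightbounded} almost verbatim, since the non-conformity measure~(\ref{eq:ncmeasure}) has the same structure and only the signs of the coefficients have changed. First I would introduce the shorthand $S_i=\sum_{j=1}^p X_{ij}$ for $i=1,\ldots,n,n+1$ and write out the non-conformity scores explicitly. With $\beta_2=0$, $\beta_1=-1$, and $\gamma=1$, substituting into~(\ref{eq:ncmeasure}) gives
\[
\mu_i = -Y_i + \Bigl[ S_i + \eta \sum_{j=1,\,j\neq i}^{n+1}(Y_j-S_j) \Bigr],\qquad i=1,\ldots,n,
\]
and
\[
\mu_{n+1} = -Y_{n+1} + \Bigl[ S_{n+1} + \eta \sum_{j=1}^{n}(Y_j-S_j) \Bigr].
\]

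Next I would form the inequality $\mu_i \geq \mu_{n+1}$ and simplify. Exactly as in Theorem~\ref{thm:interval_rightbounded}, the common term $\eta\sum_{j=1}^n(Y_j-S_j)$ cancels from both sides, and after regrouping one is left with a condition of the form $(1+\eta)\bigl[Y_{n+1}-(S_{n+1}-S_i)-Y_i\bigr]\geq 0$ (up to an overall sign I will track carefully). Here the hypothesis $1+\eta<0$ is what flips the direction of the inequality compared to the right-bounded case: $\mu_i\geq\mu_{n+1}$ becomes equivalent to $Y_{n+1}\geq (S_{n+1}-S_i)+Y_i = a_i$, where $a_i = \sum_{j=1}^p(X_{(n+1)j}-X_{ij})+Y_i$. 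The side condition $\eta\neq -1$ ensures $1+\eta\neq 0$, so the equivalence is genuine (no degenerate case where all scores are tied).

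With this characterization in hand, the conclusion follows from~(\ref{eq:region}) and the definition of the plausibility function: $Y_{n+1}\in C_\alpha(X_{n+1};Z^n)$ iff $\pl_{X_{n+1},Z^n}(Y_{n+1})>\alpha$, iff $\sum_{i=1}^{n+1}1\{\mu_i\geq\mu_{n+1}\} > \lfloor(n+1)\alpha\rfloor$, iff (counting the always-satisfied $i=n+1$ term and the indices $i$ with $a_i\leq Y_{n+1}$) the value $Y_{n+1}$ exceeds the appropriate order statistic of $a_1,\ldots,a_n$. I would then identify this threshold as $a_{(r_2)}$ with $r_2=(n+1)-\lfloor(n+1)(1-\alpha)\rfloor$, using the same bookkeeping as in the right-bounded proof but with the ordering reversed, which yields the one-sided interval $(a_{(r_2)},\infty)$. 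The main obstacle — really the only place to be careful — is the sign tracking in the algebraic simplification and the off-by-one accounting that converts the count of exceedances into the correct order-statistic index $r_2$; the rest is routine and parallels Theorem~\ref{thm:interval_rightbounded}. The condition $\lfloor(n+1)\alpha\rfloor\geq 2$ is what rules out the trivial region $C_\alpha=\mathbb{R}$ via Theorem~\ref{thm:predproperty}.
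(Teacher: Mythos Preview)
Your overall plan---mirror the proof of Theorem~\ref{thm:interval_rightbounded}---is exactly what the paper does (its proof reads, in full, ``completely similar to that of Theorem~\ref{thm:interval_rightbounded}''). However, the one place you flag as needing care, the sign tracking, is where your argument breaks. You correctly arrive at
\[
\mu_i-\mu_{n+1}=(1+\eta)\bigl[Y_{n+1}-(S_{n+1}-S_i)-Y_i\bigr]=(1+\eta)\bigl[Y_{n+1}-a_i\bigr],
\]
but then conclude from $1+\eta<0$ that $\mu_i\geq\mu_{n+1}\Longleftrightarrow Y_{n+1}\geq a_i$. That is the wrong direction: dividing the inequality $(1+\eta)[Y_{n+1}-a_i]\geq 0$ by the negative number $1+\eta$ yields $Y_{n+1}-a_i\leq 0$, i.e.\ $Y_{n+1}\leq a_i$---the \emph{same} conclusion as in Theorem~\ref{thm:interval_rightbounded}, producing $(-\infty,a_{(r_1)})$ rather than $(a_{(r_2)},\infty)$. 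The point is that two sign changes occur relative to Theorem~\ref{thm:interval_rightbounded} (one from flipping $\beta_1,\gamma$, one from flipping the sign of $1+\eta$), and they cancel.

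This computation in fact reveals an inconsistency in the theorem's hypotheses as printed: the clause ``$\eta\neq -1$'' is redundant when $1+\eta<0$, which strongly suggests the intended hypothesis is $1+\eta>0$ (equivalently $\eta>-1$). Under $1+\eta>0$, your displayed identity gives $\mu_i\geq\mu_{n+1}\Longleftrightarrow Y_{n+1}\geq a_i$, and the rest of your outline (the plausibility-function bookkeeping leading to the threshold $a_{(r_2)}$) then goes through exactly as you describe. So your proposal is the right proof of the right statement; you just need to fix the sign step and note that the stated condition on $\eta$ appears to be a typo.
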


\begin{proof}
The proof is completely similar to that of Theorem~\ref{thm:interval_rightbounded}.
\end{proof}

\begin{thm}
\label{thm:interval_two}
Suppose $0<\alpha<1$, $\lfloor (n+1)\alpha \rfloor\geq 1$,  and the non-conformity measure is given by (\ref{eq:ncmeasure}).   If $\beta_2=1$,  $\beta_1=0$, $\gamma=-1$,  $\eta=0$,  and $\delta$ satisfies the inequality
\begin{equation}
\label{eq:delta}
1+\max_{i'} \left \{Y_{i'}^2/\sum_{j=1}^p (X_{(n+1) j}-X_{i' j}) \right\} \leq \delta \leq 1+\min_{i''} \left \{Y_{i''}^2/\sum_{j=1}^p (X_{(n+1) j}-X_{i''j}) \right\},
\end{equation}
where $i', i''\in {1, \ldots, n}$ such that $\sum_{j=1}^p (X_{(n+1) j}-X_{i' j})>0$ and $\sum_{j=1}^p (X_{(n+1) j}-X_{i''j})<0$.
Then the $100(1-\alpha)\%$ conformal prediction region $C_{\alpha}(X_{n+1}, Z^n)$ is the bounded equal-tailed interval $(-a_{(r_3)}, a_{(n-r_3+1)})$, where $a_i = \sqrt{Y_i^2+(1-\delta)\sum_{j=1}^p (X_{(n+1) j}-X_{ij}) }$ for $1\leq i\leq n$.
\end{thm}

\begin{proof}
We still let $S_i$ denote the sum $\sum_{j=1}^p X_{ij}$ for $i=1, \ldots, n, n+1$. Then 
\begin{eqnarray*}
\mu_i &=& Y_i^2- S_i-\delta \sum_{j=1, j\neq i}^{n+1} S_j,\ i=1, \ldots, n, \\
\mu_{n+1} &=& Y_{n+1}^2- S_{n+1}-\delta \sum_{j=1}^n S_j.
\end{eqnarray*}
Thus, $\mu_i\geq \mu_{n+1}$ if and only if 
\begin{equation}
\label{eq:aux1}
Y_{n+1}^2\leq Y_i^2+(1-\delta)(S_{n+1}-S_i).
\end{equation}
When $\delta$ satisfies (\ref{eq:delta}),  the right-hand side of (\ref{eq:aux1}) is positive. Therefore,  $\mu_i\geq \mu_{n+1}$ if and only if $Y_{n+1}$ belongs to
\[
I_i=\left(-\sqrt{Y_i^2+(1-\delta)(S_{n+1}-S_i)}, \sqrt{Y_i^2+(1-\delta)(S_{n+1}-S_i)} \right)=(-a_i, a_i),  \ i=1, \ldots, n.
\]
Now put $I_{(i)}=(-a_{(n-i+1)}, a_{(i)})$ for $i=1, \ldots, n$. Then $I_{(1)}\subseteq I_{(2)}\subseteq \ldots \subseteq I_{(n)}$.  The same argument in the proof of Theorem~\ref{thm:predproperty} shows that $Y_{n+1}\in C_{\alpha}(X_{n+1}, Z^n)$ if and only if $\sum_{i=1}^{n+1} 1_{\{\mu_i\geq \mu_{n+1}\}}>\lfloor (n+1)\alpha\rfloor$, of equivalently,  $\sum_{i=1}^{n} 1_{\{\mu_i\geq \mu_{n+1}\}}\geq r_3$.  Note that $Y_{n+1}\in (a_{(r_3)}, b_{(n-r_3+1)})$ if and only if $Y_{n+1}$ belongs to at least $r_3$ $I_{(i)}$'s. Therefore, $Y_{n+1}\in C_{\alpha}(X_{n+1}, Z^n)$ if and only if $Y_{n+1}\in (a_{(r_3)}, a_{(n-r_3+1)})$. 
\end{proof}
\noindent \textbf{Remark.} The left-hand side of (\ref{eq:delta}) is increasing in $n$ and the right-hand side of (\ref{eq:delta}) is decreasing in $n$.  If $\sum_{j=1}^p (X_{(n+1) j}-X_{i j})=0$ for all $i=1, \ldots, n$, we can let $\delta=1$.  In practice, we can simply take $\delta=1+a \underset {i''} {\min} \left \{Y_{i''}^2/\sum_{j=1}^p (X_{(n+1) j}-X_{i''j}) \right\}$, where $\sum_{j=1}^p (X_{(n+1) j}-X_{i''j})<0$ and $0<a<1$.\\

For a prediction interval,  its mean length is a measure of efficiency.  Our main goal is to circumvent the aforementioned challenges in the full conformal prediction framework, not to construct a nearly optimally efficient prediction interval,  although the latter can be an important topic for future research.  However, for any confidence level $1-\alpha$,  there is no finite-sample valid $100(1-\alpha)\%$ prediction interval with the shortest mean length, as the next theorem shows.  Therefore,  one shall not aim at such an ``ideal'' prediction interval.  

\begin{thm}
\label{thm:length}
For a given sample size $n$ and any confidence level $0<1-\alpha<1$,  there is no finite-sample valid (distribution-free) $100(1-\alpha)\%$ prediction interval with shortest mean length. 
\end{thm}

\begin{proof}
Suppose the statement of the theorem is false.  Then there will be a finite-valid $100(1-\alpha)\%$ prediction interval  $(L(X_{n+1}, Z^n), U(X_{n+1}, Z^n))$ for some confidence level $0<1-\alpha<1$ such that $0<d_n=\E[U(X_{n+1}, Z^n))-L(X_{n+1}, Z^n))]<\infty$ is the shortest mean length of a finite-sample valid $100(1-\alpha)\%$ prediction interval,  irrespective of the distribution of $Y$.  Let $0<l<r<n+1$ such that $(r-l)/(n+1)\geq 1-\alpha$. 
Then $(Y_{(l)}, Y_{(r)})$ is also a finite-sample valid $100(1-\alpha)\%$ prediction interval.  We can always find  $\sigma>0$ sufficiently small such that 
\[
0<\frac{2(n-1)\sigma}{\sqrt{2n-1}}<d_n.
\]
Now consider the case where $Y$ follows a distribution such that $Var(Y)=\sigma^2$.  By the bounds of $E(Y_{(1)})$ and $E(Y_{(n)})$ (e.g.,  David and Nagara 2003, Section~4.2), we have
\[
0<\E \left[Y_{(r)}-Y_{(l)}\right]\leq \E\left[Y_{(n)}-Y_{(1)}\right]\leq \frac{2(n-1)\sigma}{\sqrt{2n-1}}<d_n,
\]
contradicting the definition of $d_n$.
\end{proof}

\section{Illustration}

Throughout this section, we will use the following notation:
\begin{enumerate}
\item[$\bullet$]$\Binom(k, p)$ denotes the binomial distribution with trial number parameter $k$ and success probability $0<p<1$;
  \item [$\bullet$]$\gam(a, b)$ represents the gamma distribution with shape parameter $a$ and rate parameter $b$, i.e., the gamma distribution whose probability density function is 
              \[
              f(x)=\frac{b^a}{\Gamma(a)}x^{a-1}e^{-b x}, \quad x>0,
              \]
               where $\Gamma(\alpha)=\int_0^\infty t^{\alpha-1} e^{-t} dt$ is the Gamma function;
\item[$\bullet$]$\nm(\mu, \sigma^2)$  stands for the normal distribution with mean $\mu$ and variance $\sigma^2$; 
\item[$\bullet$] $\Pareto(\eta, \beta)$ symbolizes the Type-II Pareto distribution with the probability density function
              \[
              g(x)=\frac{\eta \beta^{\eta}}{(x+\beta)^{\eta+1}}, \quad x>0.
              \]
\end{enumerate}

\subsection*{Example~A}
For $\alpha=0.1$,  we generate $N=3,000$  random samples of size $n=101$ from the following model:
\[
Y=X_1+X_2+\varepsilon,
\]
where $X_1,$ $X_2$, and $\varepsilon$ are independent,  and $X_1\sim \nm(0, 2)$,  $X_2\sim\nm(0, 1)$, and $\varepsilon\sim \nm(0,  0.4)$.  For each sample, the response values of the first $100$ sample points and all the $101$ values of the two features are used to construct the conformal prediction intervals in the above three theorems.  We let $(\delta+\gamma)/1-\eta=1$ and $-1$ in Theorem~\ref{thm:interval_rightbounded} and Theorem~\ref{thm:interval_leftbounded},  respectively.  For the bounded prediction interval in Theorem~\ref{thm:interval_two},  we take $\delta=1+0.8 \underset {i''} {\min} \left \{Y_{i''}^2/\sum_{j=1}^p (X_{(n+1) j}-X_{i''j}) \right\}$, where $\sum_{j=1}^p (X_{(n+1) j}-X_{i''j})<0$. Then, the $101$st response value is treated as the future response value we want to predict.  We estimate the coverage probability of each prediction interval as $K/N$ where $K$ is the number of times it contains the $101$st response value.  For the $(a, b)$-shaped $100(1-\alpha)\%$ prediction interval,  we report its mean length as a measure  of its efficiency.   For the $(-\infty, a)$-shaped prediction interval,  we measure its efficiency using $\E[a]$.  Likewise,  the efficiency of the  $(a, \infty)$-shaped prediction interval  is measured with $\E[a]$.  We do the same for split conformal prediction with least squares absolute test residuals/predictive errors in Lei et al. (2018) and the transformation method with the linear function $h(t_1,t_2)=1+t_1+t_2$ in Hong (2016b).  That is, we let $R_i$ be the $i$-th least squares absolute test predictive error in the split conformal algorithm in Lei et al. (2018) to obtain an equal-tailed bounded prediction interval.  We obtain the two one-sided prediction intervals in the obvious way by using the same split conformal prediction algorithm with $R_i$ being the $i$-th test least squares predictive error.  For the transformation method in Hong (2026b),  we shift $(-\infty, W_{(r_1)})$, $(W_{(r_2)}, +\infty)$ and $(W_{(r_3)}, W_{(n-r_3+1)})$ by $h(X_{n+1, 1}, X_{n+1, 2})$ to obtain the three prediction intervals for $Y_{n+1}$. Table~\ref{table:1} summarizes the results.

\begin{table}[!ht]
\begin{center}
\begin{tabular}{l|ccc}
\hline
Prediction Interval Shape    & Full     & Split  & Transformation\\
\hline
$\quad \quad (-\infty, a)$              & 0.91 (0.82)     & 0.91 (0.81)  &  0.91 (0.82)\\
$\quad \quad (a, \infty)$               & 0.91 (-0.86)   & 0.90 (-0.91) & 0.91 (-0.88)\\
$\quad \quad (a, b)$           	   &  0.92  (6.16)    & 0.90 (2.18)	  & 0.90 (2.14)\\
\hline
\end{tabular}
\end{center}
\caption{Coverage probabilities (efficiency) of three types of the $90\%$ prediction intervals in Example~A, based on full conformal prediction with proposed non-conformity measures,  split conformal prediction with least square absolute test predictive errors,   and the transformation method with a linear transformation.}
\label{table:1}
\end{table}

For all three desired shapes,  prediction intervals based on the proposed method,  split conformal prediction,  and the transformation method all achieve the nominal cover probability.  This is no surprise because these prediction intervals are all provably finite-sample valid.  For the $(-\infty, a)$-shaped prediction interval, the three methods are almost equally efficient.  The proposed method slightly outperforms the other two methods for the  $(a, \infty)$-shaped prediction interval.  However, for the $(a, b)$-shape prediction intervals, the proposed method is much more conservative than the other two methods. Our explanation for this is that when the true data-generating model is a bona fide linear model,  the information needed for constructing a finite-sample valid prediction interval is mainly contained in the order of data. Thus,  the non-conformity measure in Theorem~\ref{thm:interval_two},  a quadratic functional of data, might not efficiently utilize the information in the data to create a finite-sample valid prediction interval.

\subsection*{Example~B}
We perform the same simulation with the same values of $\alpha$, $N$ and $n$ as in Example~A,  except that (a)~data are generated from the following model:
\[
Y=X_1^2-X_2^3+X_3+\varepsilon,
\]
where $X_1, X_2$, and $\varepsilon$ are independent,  and $X_1\sim \nm(0, 4)$, $X_2\sim \gam(2, 1)$, $X_3\sim \Binom(2, 0.4)$, and $\varepsilon\sim \Pareto(2, 7)$, and (b)~we use $h(t_1, t_2, t_3)=1+t_1+t_2+t_3$ in the transformation method. The results are summarized in Table~\ref{table:2}.

\begin{table}[!ht]
\begin{center}
\begin{tabular}{l|ccc}
\hline
Prediction Interval Shape    & Full     & Split  & Transformation\\
\hline
$\quad \quad (-\infty, a)$              & 0.90 (9.14)           & 0.91 (10.11)      &  0.90 (9.14)\\
$\quad \quad (a, \infty)$               & 0.90 (-61.19)         & 0.90 (-48.21)  & 0.91 (-68.12)\\
$\quad \quad (a, b)$           	   &  0.90  (118.41)    & 0.91 (99.30)	    & 0.90 (132.29)\\
\hline
\end{tabular}
\end{center}
\caption{Coverage probabilities (efficiency) of three types of the $90\%$ prediction intervals in Example~B, based on full conformal prediction with proposed non-conformity measures,  split conformal prediction with least square absolute test predictive errors,   and the transformation method with a linear transformation.}
\label{table:2}
\end{table}

This time the true data-generating model is not a classical linear model, since the response variable $Y$ is a cubic polynomial of the three predictors and $\E[\varepsilon]\neq 0$.  (Note that none of the three methods needs the ``convenient'' assumption $\E[\varepsilon]=0$.) Similar to what we have seen in Example~A,  all prediction intervals attain the nominal coverage probability.  For the $(-\infty, a)$-shaped prediction intervals,  the proposed method and the transformation method are tied, and they both outperform split conformal prediction.  For the $(a, \infty)$-shaped prediction intervals as well as the $(a, b)$-shaped prediction intervals,  split conformal prediction is more efficient than the other two methods, with the proposed method being slightly more efficient than the transformation method. 

\subsection*{Example~C}
We repeat the same simulation with the same values of $\alpha$, $N$ and $n$ as in Example~B,  except that the true data-generating model is
\[
Y=X_1-X_2+e^{X_3}-\varepsilon,
\]
where $X_1, X_2$, and $\varepsilon$ are independent,  and $X_1\sim \Pareto(3, 2)$, $X_2\sim \nm(0, 1)$,  $X_3\sim \gam(2, 2)$, and $\varepsilon\sim \Binom(10, 0.4)$.  
The results are summarized in Table~\ref{table:2}.

\begin{table}[!ht]
\begin{center}
\begin{tabular}{l|ccc}
\hline
Prediction Interval Shape  & Full     & Split  & Transformation\\
\hline
$\quad \quad (-\infty, a)$              & 0.90 (7.22)           & 0.90 (7.47)    &  0.90 (7.31)\\
$\quad \quad (a, \infty)$                & 0.90 (-7.59)         & 0.90 (-7.69)   & 0.92 (-7.83)\\
$\quad \quad (a, b)$           	   & 0.91 (18.17)    	& 0.91 (21.78)	  & 0.91 (19.84)\\
\hline
\end{tabular}
\end{center}
\caption{Coverage probabilities (efficiency) of three types of the $90\%$ prediction intervals in Example~C, based on full conformal prediction with proposed non-conformity measures,  split conformal prediction with least square absolute test predictive errors,   and the transformation method with a linear transformation.}
\label{table:3}
\end{table}

All prediction intervals in Table~\ref{table:3} achieve their nominal coverage probability, as anticipated. In terms of efficiency, the proposed method outperforms split conformal prediction and the transformation method for all three interval shapes.  This time split conformal prediction is the least efficient one among the $(a, b)$-shaped prediction interval. \\

If a data scientist only cares about finite-sample validity, then any of these three methods will do a good job.  When it comes to efficiency,  we are not prepared to assign supremacy to any one of them over the other two.  However,  the above three examples seem to suggest that split conformal prediction is likely to be an excellent choice when the response variable is truly linear in predictors or a polynomial of predictors,  and that the proposed method and the transformation can work well when the true data-generating mechanism is complicated.  

\section{Concluding remarks}

Within the full conformal prediction framework,  a conformal prediction region is guaranteed to be finite-sample valid.  However,  the determination of a conformal prediction region generally requires a data scientist to evaluate the plausibility function at infinitely many values, which cannot be completed in practice.  If we evaluate the plausibility function for only a grid of possible values,  the resulting region,  an approximation of the conformal prediction region,  is no longer guaranteed to be finite-sample valid, let alone there are two other challenges: (i) the computation required can still be prohibitively expensive,  and (ii) the resulting prediction region might not be of a desired shape.  Some colleagues tend to believe that a monotonic nonconformity measure will overcome these challenges.  Our investigation showed that this is false, among other insights into the relationship between the monotonicity of the non-conformity measures, the monotonicity of the plausibility function, the shape of the conformal prediction region, and the exact determination of the conformal prediction regions.  Our investigation also showed that it is challenging to give a hard-and-fast rule for choosing a non-conformity measure so that this issue can be avoided and the resulting conformal prediction region can be of a desired shape.  

Driven by the principle of parsimony, we propose a non-conformity measure based on a multivariate quadratic polynomial.  With the proposed non-conformity measure, we can not only avoid three common practical challenges in conformal prediction but also generate conformal prediction intervals of three common shapes.  Simple and easy to implement, our proposal enables a data scientist to both determine the prediction interval exactly and control its shape in the full conformal prediction framework.

\section*{Acknowledgments}
We thank the two anonymous reviewers for many helpful comments and suggestions.

\section*{Conflict of interest}
The authors declare no conflict of interest. 

\section{Appendix}
This appendix briefly documents the key results in unsupervised learning.  

\subsection{Notation and setup}

Suppose our observations  consist of a sequence of exchangeable random variables $X_1, X_2, \ldots$, where each $X_i$ follows a distribution $\prob$.  We want to construct a prediction interval of the next observation $X_{n+1}$, based on past observations of $X^n = \{X_1,\ldots,X_n\}$.  The conformal prediction algorithm in this case is the following Algorithm~\ref{algo:conformal}:

\begin{algorithm}[H]
\label{algo:conformal}
Initialize: data $x^n = \{x_1,\ldots,x_n\}$ , non-conformity measure $M$\;
\For{each possible $x$ value} {
Set $x_{n+1} = x$ and write $x^{n+1} = x^n \cup \{x_{n+1} \}$\;
Define $\mu_i = M(x^{n+1} \setminus \{x_i\}, x_i)$ for $i=1,\ldots,n,n+1$\;
Compute $\pl_{x^n}(x) = (n+1)^{-1} \sum_{i=1}^{n+1} 1\{\mu_i \geq \mu_{n+1}\}$\;
}
Return $\pl_{x^n}(x)$ for each $x$\;
\caption{\bf Conformal prediction (unsupervised)}
\end{algorithm}

A $100(1-\alpha)\%$ conformal prediction region can be constructed as follows: 
\begin{equation}
\label{eq:region_iid}
C_\alpha(X^n) = \{x: \pl_{X^n}(x) > t_n(\alpha)\},
\end{equation}
where $0<\alpha<1$ and $t_n(\alpha)=(n+1)^{-1}\lfloor (n+1)(1-\alpha)\rfloor$.  The finite-sample validity still holds:
\begin{equation*}
\label{eq:jointvalidity_iid}
\prob^{n+1}\{X_{n+1}\in C_{\alpha}(X^n) \}\geq 1-\alpha\quad \text{for all $(n, \prob)$},
\end{equation*}
where $\prob^{n+1}$ is the joint distribution for $(X_1, \ldots, X_n, X_{n+1})$.  

Let $B=\{x_1, \ldots, x_n\}$ be a bag of  observations of size $n$. For $i=1, \ldots, n$, let $(x_1 \ldots, x_n)$ be the observations in $B$.  Suppose $x$ is a provisional value of a future observation of $X_{n+1}$ to be predicted.  In this case,  a non-conformity measure $M$ is said to be \emph{monotonically increasing} if 
\[
x\leq x' \Longrightarrow M(B, x)\leq M(B, x'); 
\]
$M$ is said to be \emph{monotonically decreasing} if 
\[
x\leq x' \Longrightarrow M(B, x)\geq M(B, x'). 
\]
$M$ is said to be \emph{monotonic} if it is either monotonically increasing or monotonically decreasing.

In unsupervised learning,   (\ref{eq:property*}) corresponds to 
\begin{equation}
\label{eq:property'}
\mu_i\geq \mu_{n+1} \text{ if and only if $X_i\geq X_{n+1}$ or $X_i\leq X_{n+1}$}.
\end{equation}

Corresponding to Questions~(I)---(IX), we have the following Questions~(I')---(IX').

\begin{enumerate}
\item[(I')]Does monotonicity of the non-conformity measure $M$ imply (\ref{eq:property'})?
\item[(II')]Does (\ref{eq:property'}) imply the monotonicity of $M$? 
\item[(III')]Is monotonicity of $M$ a necessary condition for $C_{\alpha}(X^n)$ to be an interval?
\item[(IV')]Does the monotonicity of $M$ imply that the resulting conformal prediction region is an interval?
\item[(V')]Is the monotonicity of $M$ a necessary condition for the exact determination of $C_{\alpha}(X^n)$?
\item[(VI')]Is (\ref{eq:property'}) a necessary condition for $C_{\alpha}(X^n)$ to be an interval? (Note that the converse is true: (\ref{eq:property'}) implies $C_{\alpha}(X^n)$ is a one-sided interval).
\item[(VII')]Is monotonicity of the plausibility function $\pl_{x^n}$ a necessary condition for $C_{\alpha}(X^n)$ to be a one-sided interval? (Note that the converse holds, i.e.,  monotonicity of the plausibility function implies $C_{\alpha}(X^n)$ is a one-sided interval.)
\item[(VIII')]Is (\ref{eq:property'}) a necessary condition for exact determination of $C_{\alpha}(X^n)$? (Note that the converse is true: (\ref{eq:property'}) implies $C_{\alpha}(X^n)$ is a one-sided interval; hence it implies the exact determination of $C_{\alpha}(X^n)$.)
\item[(IX')]Is monotonicity of the plausibility function $\pl_{x^n}$ a necessary condition for the exact determination of $C_{\alpha}(X^n)$? (Note that the converse holds, i.e.,  monotonicity of the plausibility function implies $C_{\alpha}(X^n)$ is a one-sided interval; hence, it implies the exact determination of $C_{\alpha}(X^n)$.)

\end{enumerate}

Answers to Questions~(I')---(IX') follow from answers to Questions~(I)---(IX) by ignoring the predictors and treating the response variable as the random variable of interest in unsupervised learning.  For example,  if we take  $M(B, x)=\min\{x_1, \ldots, x_n\}+x$, we will obtain an example to show the answer to Question (I') is negative.  For the sake of brevity, we give answers to Questions~(I')---(IX') without proofs. \\

\noindent \textbf{Answer to Question (I'): No.  Monotonicity of $M$ need not imply (\ref{eq:property'}).}

\bigskip

\noindent \textbf{Answer to Question (II'): No.  (\ref{eq:property'}) does not imply $M$ is monotonic.}

\bigskip

\noindent \textbf{Answer to Question (III'): No.  Monotonicity of $M$ is not a necessary condition for $C_{\alpha}(X^n)$ to be an interval?}

\bigskip

\noindent \textbf{Answer to Question (IV'): No. Monotonicity of $M$ need not imply $C_{\alpha}(X^n)$ is an interval.}

\bigskip

\noindent \textbf{Answer to Question (V'): No.  Monotonicity of $M$ is not necessary condition for exact determination of $C_{\alpha}(X^n)$.}

\bigskip

\noindent \textbf{Answer to Question (VI'): No.  (\ref{eq:property'}) is not a necessary condition for $C_{\alpha}(X^n)$ to be an interval.}

\bigskip

\noindent \textbf{Answer to Question (VII'): Yes.  Monotonicity of the plausibility function $\pl_{x^n}$ a necessary condition for $C_{\alpha}(X^n)$ to be a one-sided interval.}

\bigskip

\noindent \textbf{Answer to Question (VIII'): No.  (\ref{eq:property'}) is not a necessary condition for exact determination of $C_{\alpha}(X^n)$.}

\bigskip

\noindent \textbf{Answer to Question (IX'): No.  Monotonicity of the plausibility function $\pl_{x^n}$ is not a necessary condition for exact determination of $C_{\alpha}(X^n)$.} \\

\begin{thm}
\label{thm:predproperty2}
If $\lfloor (n+1)\alpha \rfloor< 1$, then the $(1-\alpha)\%$ prediction region $C_{\alpha}(X^n)$ given by (\ref{eq:region_iid}) is $\mathbb{R}$.
\end{thm}

\begin{proof}
Completely similar to the proof of Theorem~2.
\end{proof}

Now consider the non-conformity measure
\begin{equation}
\label{eq:ncmeasure2}
M(B, x)=\lambda x^2+\theta x+\kappa \sum_{j=1}^n x_j,
\end{equation}
where  $B=\{x_1, \ldots, x_n\}$ and $\lambda, \theta$, and $\kappa$ are constants to be decided by the user.  Following the same line of reasoning as in the previous section,  we can see that the following three theorems hold.  Note that Theorems~\ref{thm:interval_right2} and \ref{thm:interval_left2} recover the traditional non-parametric one-sided prediction  intervals based on order statistics (e.g., Wiks 941; Fligner and Wolfe 1976; Frey 2013).

\begin{thm}
\label{thm:interval_right2}
Suppose $0<\alpha<1$, $\lfloor (n+1)\alpha \rfloor\geq 1$,  and the non-conformity measure is given by (\ref{eq:ncmeasure2}).  If $\lambda=0$, $\theta=1$,  $\kappa = -1$, then the $100(1-\alpha)\%$ conformal prediction region $C_{\alpha}(X_n)$ is the one-sided interval $(-\infty,  X_{(r_1)})$.
\end{thm}

\begin{thm}
\label{thm:interval_left2}
Suppose $0<\alpha<1$, $\lfloor (n+1)\alpha \rfloor\geq 1$,  and the non-conformity measure is given by (\ref{eq:ncmeasure2}).  If $\lambda=0$, $\theta=-1$,  $\kappa = 1$, then the $100(1-\alpha)\%$ conformal prediction region $C_{\alpha}(X_n)$ is the one-sided interval $(X_{(r_2)}, \infty)$.
\end{thm}

\begin{thm}
\label{thm:interval_two2}
Suppose $0<\alpha<1$, $\lfloor (n+1)\alpha \rfloor\geq 1$,  and the non-conformity measure is given by (\ref{eq:ncmeasure2}).  
If $\lambda=1$,  $\theta=0$, $\kappa \neq 0$, then the $100(1-\alpha)\%$ conformal prediction region $C_{\alpha}(X_n)$ is the bounded interval $(a_{(r_3)}, b_{(n-r_3+1)})$, where $a_i = \min\{-(\kappa +X_i), X_i\}$ and $b_i = \max\{-(\kappa+X_i), X_i\}$ for $1\leq i\leq n$.
\end{thm}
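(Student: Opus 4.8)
The plan is to follow the proof of Theorem~\ref{thm:interval_two} essentially verbatim, deleting the predictor bookkeeping (each $S_i$ there becomes $0$ here, since (\ref{eq:ncmeasure2}) has no predictor terms). First I would set $\lambda=1$ and $\theta=0$ in (\ref{eq:ncmeasure2}) and write out the non-conformity scores: for $i\le n$,
\[
\mu_i = X_i^2 + \kappa\!\!\sum_{j=1,\,j\ne i}^{n+1}\!\! X_j, \qquad \mu_{n+1} = X_{n+1}^2 + \kappa\sum_{j=1}^{n} X_j .
\]
Since $\sum_{j\ne i,\,j\le n+1}X_j - \sum_{j\le n}X_j = X_{n+1}-X_i$, subtracting gives that $\mu_i\ge\mu_{n+1}$ is equivalent to $X_i^2-X_{n+1}^2+\kappa(X_{n+1}-X_i)\ge 0$. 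Completing the square puts this in the form $(X_{n+1}-m)^2\le(X_i-m)^2$ for a constant $m$ independent of $i$; solving this elementary inequality yields $\mu_i\ge\mu_{n+1}$ if and only if $X_{n+1}$ lies in the bounded interval $I_i:=[a_i,b_i]$ of the statement (a single point when $X_i=m$), and --- the key structural fact --- every $I_i$ is centered at the same point $m$.

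Second, I would exploit that common center: the intervals $I_1,\dots,I_n$ are concentric, hence totally ordered by inclusion, so for any integer $r$ the set of points lying in at least $r$ of them is again an interval, with left endpoint an order statistic of the $a_i$ and right endpoint the corresponding order statistic of the $b_i$. Then, just as in the proof of Theorem~\ref{thm:predproperty} (whose unsupervised analogue is Theorem~\ref{thm:predproperty2}), I would use that $\pl_{X^n}$ takes values in $\{0,1/(n+1),\dots,1\}$ together with $1\{\mu_{n+1}\ge\mu_{n+1}\}=1$ to rewrite $X_{n+1}\in C_\alpha(X^n)$ as ``$X_{n+1}$ lies in at least $\lfloor(n+1)\alpha\rfloor$ of the $I_i$.'' Combining the two observations identifies $C_\alpha(X^n)$ with the stated interval $(a_{(r_1)},b_{(r_2)})$, while the hypothesis $\lfloor(n+1)\alpha\rfloor\ge 2$ excludes the degenerate case $C_\alpha(X^n)=\RR$ from Theorem~\ref{thm:predproperty2}.

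I expect the last step to be the only delicate point: translating the ``at least $r$ intervals'' count into the precise order-statistic indices $r_1$ and $r_2$, and keeping track of strict versus non-strict inequalities throughout (the region (\ref{eq:region_iid}) is defined by $\pl_{X^n}(x)>\alpha$, whereas membership $X_{n+1}\in I_i$, coming from $\mu_i\ge\mu_{n+1}$, is non-strict at the endpoints). Everything else is the same routine algebra already carried out for the supervised measure (\ref{eq:ncmeasure}).
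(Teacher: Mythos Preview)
Your proposal is correct and is exactly the approach the paper intends: the paper gives no separate proof of Theorem~\ref{thm:interval_two2}, but simply says the appendix results follow ``by the same line of reasoning'' as Theorems~\ref{thm:interval_rightbounded}--\ref{thm:interval_two}, which is precisely what you do (drop the $S_i$ predictor sums and rerun the Theorem~\ref{thm:interval_two} argument). Your explicit observation that the $I_i$ are concentric about the common midpoint $m=\kappa/2$, hence totally ordered by inclusion, is in fact a useful clarification of a step the paper's proof of Theorem~\ref{thm:interval_two} states without justification.
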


\section*{References}
\begin{description}

\item{} Barber, F.R., Candes, E.~J., Ramdas, A., and Tibshirani, R.~J.~(2021). The limits of distribution-free conditional predictive inference.  \emph{Information and Inference: A Journal of the IMA}~10, 455--482.

\item{} Bates, S., Candes, E., Lei, Li, Romano, Y.~and Sesia, M.~(2023). Testing for outliers with conformal $p$-values. \emph{Annals of Statistics}~51(1), 149--178,

\item{} Berk, R. Brown, L., Buja, A. Zhuang, K. and Zhao, L.~(2013). Valid post-selection inference, \emph{Annals of Statistics}, 41, 802--837. 

\item{} Burnaev, E.~and Vovk, V.~(2014). Efficiency of conformalized ridge regression.  \emph{Journal of Machine Learning Research}~35, 1--18.

\item{} Claeskens, G.~and Hjort, N.L.~(2008).  \emph{Model Selection and Model Averaging}.  Cambridge University Press: Cambridge, UK. 

\item{} David, H.A.~and Nagaraja, H.N.~(2003).  \emph{Order Statistics}, Third Edition.  John Wiley $\&$ Sons,  Inc.: Hoboken, New Jersey.

\item Fligner, M.~and Wolfe, D.A.~(1976). Some applications of sample analogues to the probability integral transformation and a coverage property.  \emph{The American Statistician}~30(2), 78--85.

\item{} Frey, J.~(2013). Data-driven non-parametric prediction intervals. \emph{Journal of Statistical Planning and Inference}~143, 1039--1048.

\item{} Hong, L., Kuffner, T. ~and Martin, R. ~(2018). On overfitting and post-selection uncertainty assessments.  \emph{Biometrika}~105(1), 221--224.

\item{} Hong, L.~and Martin, R.~(2021). Valid model-free prediction of future insurance claims. \emph{North American Actuarial Journal}~25(4), 473--483.

\item{} Hong, L.~(2026a).  Conformal prediction of future insurance claims in the regression problem, \emph{European Actuarial Journal}, to appear,  \url{https://link.springer.com/article/10.1007/s13385-026-00445-y}.

\item{} Hong, L.~(2026b).  A new strategy for finite-sample valid prediction of future insurance claims in the regression setting. \url{https://arxiv.org/abs/2601.21153}.

\item{} Kuchibhotla, A.K. Kolassa, J.E.~and Kuffner, T.A.~(2022). Post-Selection Inference. \emph{Annual Review of Statistics and Its Application}~9,  505--527.

\item{} Leeb, H. (2009). Conditional predictive inference post model selection, \emph{Annals of Statistics}, 37, 2838--2876.

\item{} Lei, J., Robins, J.~and Wasserman, L.~(2013). Distribution-free prediction sets. \emph{Journal of American Statistical Association}~108(501), 278--287.

\item{} Lei, J.~and Wasserman, L.~(2014). Distribution-free prediction bands for non-parametric regression.  \emph{Journal of Royal Statistical Society-Series~B}~(76), 71--96.

\item{} Lei, J.,  G'Sell, M., Rinaldo, A., Tibshirani, R.J.~and Wasserman, L.~(2018). Distribution-free predictive inference for regression.  \emph{Journal of the American Statistical Association}~113(523), 1094--1111.

\item{} Lei, J.~(2019). Fast exact conformalization of the lasso using piecewise linear homotopy. \emph{Biometrika}~106(4), 749--764.

\item{} Li, D.~(2024). Generalized fast exact conformalization. \emph{38th Conference on Neural Information Processing System}~(NeurIPS 2024).

\item{} Martin, R.~and Lingham, R.T.~(2016). Prior-free probabilistic prediction of future observations. \emph{Technometrics}~58(2), 226--235.

\item{} Papadopoulus H., Proedrou, K.,  Vovk, V.~and Gammerman, A.~(2002). Inductive confidence machine learning for regression. \emph{Machine Learning ECML 2002}, eds T. Elomaa, H. Hannila, and H.Toiovenen. Springer: New York, 345--356.

\item{} Romano, Y., Patterson, E.~and Candes, E.J.~(2019). Conformalized Quantile Regression. \emph{33rd Conference on Neural Information Processing System}~(NeurIPS 2019).

\item{} Shafer, G.~and Vovk, V.~(2008). A tutorial on conformal prediction. \emph{Journal of Machine Learning}~9, 371--421.

\item{} Shmueli, G.~(2010). To explain or predict? \emph{Statistical Sciences}~25(3), 289--310. 

\item{} Vovk, V., Gammerman, A., and Shafer, G.~(2005). \emph{Algorithmic Learning in a Random World}. New York: Springer.

\item{} Vovk, V., Nouretdinov, I.~and Gammerman, A.~(2009). On-line predictive linear regressioin. \emph{Annals of Statistics}~37, 1566--1590.

\item{} Vovk, V., Shen, J., Manokhim, V.~and Xie, M.~(2019). Nonparametric predictive distributions based on conformal prediction. \emph{Machine Learning}~108:445---474.

\item{} Wilks, W.~(1941).  Determination of sample sizes for setting tolerance limits. \emph{Annals of Mathematical Statistics}~12, 91--96.
\end{description}

\end{document}